\newtheorem{theorem}{Theorem}
\newtheorem{proposition}{Proposition}
\newtheorem{definition}{Definition}
\newtheorem{remark}{Remark}
\newtheorem{example}{Example}
\newcommand{\Ne}{\mathbb{N}}
\renewcommand{\Re}{\mathbb{R}}
\newcommand{\Prob}{\mathsf{P}}
\newcommand{\Alg}{\mathcal{A}}
\newcommand{\va}{\mathbf{a}}
\newcommand{\vz}{\mathbf{z}}
\newcommand{\Rg}{\mathcal{R}}
\newcommand{\calS}{\mathcal{S}}
\newcommand{\set}{\mathrm{set}}
\DeclareMathOperator*{\argmax}{arg\,max}
\title{\LARGE\bf PAC Learnability of Scenario Decision-Making Algorithms: Necessary Conditions and Sufficient Conditions}
\author{Guillaume O.~Berger, and Rapha\"el M.~Jungers%
\thanks{This project has received funding from (i) the Belgian National Fund for Scientific Research (F.R.S.--FNRS), (ii) the European Research Council (ERC) [under the European Union's Horizon 2020 research and innovation programme, grant agreement no.~864017, project name: L2C], and (iii) the ARC (French Community of Belgium) [project name: SIDDARTA].}
\thanks{G.~Berger is with ICTEAM institute, UCLouvain, Louvain-la-Neuve, Belgium.
He is an FNRS Postdoctoral Researcher (e-mail: guillaume.berger@uclouvain.be).}
\thanks{R.~Jungers is with ICTEAM institute, UCLouvain, Louvain-la-Neuve, Belgium.
He is an FNRS honorary Research Associate (e-mail: raphael.jungers@uclouvain.be).}}
\begin{document}

\maketitle
\thispagestyle{empty}
\pagestyle{empty}

\begin{abstract}
We investigate the Probably Approximately Correct (PAC) property of scenario decision algorithms, which refers to their ability to produce decisions with an arbitrarily low risk of violating unknown safety constraints, provided a sufficient number of realizations of these constraints are sampled.
While several PAC sufficient conditions for such algorithms exist in the literature---such as the finiteness of the VC dimension of their associated classifiers, or the existence of a compression scheme---it remains unclear whether these conditions are also necessary.
In this work, we demonstrate through counterexamples that these conditions are not necessary in general.
These findings stand in contrast to binary classification learning, where analogous conditions are both sufficient and necessary for a family of classifiers to be PAC.
Furthermore, we extend our analysis to stable scenario decision algorithms, a broad class that includes practical methods like scenario optimization.
Even under this additional assumption, we show that the aforementioned conditions remain unnecessary.
Furthermore, we introduce a novel quantity, called the dVC dimension, which serves as an analogue to the VC dimension for scenario decision algorithms.
We prove that the finiteness of this dimension is a PAC necessary condition for scenario decision algorithms.
This allows to (i) guide algorithm users and designers to recognize algorithms that are not PAC, and (ii) contribute to a comprehensive characterization of PAC scenario decision algorithms.
\end{abstract}

\section{INTRODUCTION}
\label{sec:introduction}

Risk-aware decision making is an important problem in engineering, encompassing many applications of great importance, such as control, energy planning, healthcare, etc.
One key challenge in this problem is that the distribution of the uncertainty is usually unknown, so that one has to rely on observations to make a decision that has a low \emph{risk} (probability of failure).
The approach of \emph{scenario decision making}~\cite{calafiore2006thescenario,campi2008theexact,calafiore2010random,campi2011asamplinganddiscarding,margellos2014ontheroad,esfahani2015performance,grammatico2016ascenario,campi2018waitandjudge,campi2018ageneral,yang2019chanceconstrained,rocchetta2021ascenario,romao2021tight,garatti2022risk,romao2023ontheexact,campi2023compression} provides an effective way to address this problem by using the principle of sample-based methods.
Concretely, this approach consists in sampling $N$ realizations of the uncertainty, and making a decision based on these samples.
The process is called a \emph{scenario decision algorithm}, mapping finite sets of samples to decisions.
Under some conditions on the problem and the algorithm, one can guarantee with high probability that if enough samples are provided to the algorithm, the returned decision has a low risk.
This property is called the \emph{PAC (Probably Approximately Correct)} property.

Several PAC sufficient conditions, i.e., conditions on the problem and the scenario decision algorithm ensuring that the algorithm is PAC, have been studied in the literature; see, e.g.,~\cite{rocchetta2024asurvey} for a survey.
These conditions can be grouped into two categories:%
\footnote{Three categories in~\cite{rocchetta2024asurvey}, but we merged ``compression-based methods'' and ``scenario-based methods'' because of their similarities.}
\emph{complexity-based} conditions and \emph{compression-based} conditions.
Complexity-based conditions, such as those in~\cite{de2004onconstraint,alamo2009randomized,lauer2024marginbased}, provide PAC guarantees for scenario decision problems whose decision space has bounded ``complexity'' such as bounded VC dimension or Rademacher complexity~\cite{rocchetta2024asurvey}.
In this work, we focus on complexity-based PAC conditions that are based on the VC dimension, as in~\cite{alamo2009randomized}.
Compression-based conditions, such as those in~\cite{campi2008theexact,calafiore2010random,margellos2015ontheconnection,campi2023compression}, provide PAC guarantees for scenario decision algorithms whose input can be ``compressed'', meaning that a subset of the samples provides the same decision as the complete set of samples~\cite{rocchetta2024asurvey}.
Note that the results in~\cite{campi2008theexact,calafiore2010random,campi2023compression} require additional assumptions, such as \emph{stability} (see Definition~\ref{def:stable}) and \emph{non-degeneracy} (\cite[Definition~2.7]{calafiore2010random}).

The objective of this work is to progress in the understanding of what makes a scenario decision algorithm PAC or not.
Our starting point is to ask whether the above PAC sufficient conditions are also necessary.
We show with counterexamples that this is not the case (Section~\ref{sec:necessity-pac-sufficient}).
We do this for general scenario decision algorithms, and for \emph{stable} ones (with slightly weaker conclusions); see Table~\ref{tab:results} for a summary.
Designing and analyzing these counterexamples is the first main contribution of this work.
The second main contribution is to provide a novel quantity, similar to the VC dimension, for scenario decision algorithms, and showing that finiteness of this quantity is a PAC necessary condition (Section~\ref{sec:decision-vc-dimension}).
We also show that this PAC necessary condition is not sufficient.


\begin{table}
\centering
\begin{tabular}{@{}ccccc@{}}
\toprule
& \begin{tabular}{@{}c@{}} Infinite \\ VC dim.
\end{tabular} & \begin{tabular}{@{}c@{}} No Compr.\\Scheme \end{tabular} & \begin{tabular}{@{}c@{}} No Compr.\\Map \end{tabular} & \begin{tabular}{@{}c@{}} Finite \\ dVC dim.
\end{tabular} \\
\midrule
PAC & Sec.~\ref{sec:consistent-pac-learner-inf-vc} & Sec.~\ref{sec:pac-learner-no-compression} & Sec.~\ref{sec:pac-learner-no-compression} & \rotatebox[origin=c]{45}{$\Rightarrow$} \\
Stable PAC & Sec.~\ref{sec:consistent-pac-learner-inf-vc} & Empty? & Sec.~\ref{sec:consistent-pac-learner-no-strong-compression} & \rotatebox[origin=c]{45}{$\Rightarrow$} \\
Not PAC & \rotatebox[origin=c]{45}{$\Rightarrow$}~\cite{alamo2009randomized} & \rotatebox[origin=c]{45}{$\Rightarrow$}~\cite{littlestone1986relating} & \rotatebox[origin=c]{45}{$\Rightarrow$}~\cite{littlestone1986relating} & Sec.~\ref{sec:decision-vc-dimension-counter} \\
\bottomrule
\end{tabular}
\caption{Summary of our results and counterexamples (e.g., Sec.~\ref{sec:consistent-pac-learner-inf-vc} contains an algorithm that is PAC, and stable PAC, and has infinite VC dimension).
The symbol ``\rotatebox[origin=c]{45}{$\Rightarrow$}'' means that the condition of the row implies the condition of the column.}
\label{tab:results}
\vskip-5pt
\end{table}

\subsection{Connections with the Literature}

To the best of our knowledge, this is the first work studying the necessity of PAC conditions for scenario decision algorithms.
The algorithms in~\cite[Theorem~2.1]{de2004onconstraint} and~\cite[Example~23.1]{elyaniv2015ontheversion} are shown to have no compression scheme and be PAC-like (through VC theory); however, the notion of PAC used there is different since the algorithm returns a \emph{set} of decisions, and PAC is defined with respect to the largest risk of a decision in the returned set; see also, e.g.,~\cite[Def.~1]{grammatico2016ascenario}.
The algorithm in~\cite[Appendix~A]{grammatico2016ascenario} is shown to have an unbounded number of support constraints.
With a bit of work, it can be shown to be a valid counterexample for Section~\ref{sec:consistent-pac-learner-no-strong-compression}.
More broadly, we believe that other examples in the literature could potentially serve as counterexamples in Section~\ref{sec:necessity-pac-sufficient}.
However, to the best of our knowledge, these examples have not been explicitly demonstrated to meet the required conditions.
In contrast, our counterexamples are rigorously proven to do so.
Furthermore, they have been carefully designed to be as simple as possible, serving two key purposes: (i) facilitating ease of analysis, and (ii) acting as prototypical examples of problem classes that satisfy the requirements; thereby guiding algorithm designers to identify appropriate PAC sufficient conditions for their problem.
For instance, Section~\ref{sec:consistent-pac-learner-no-strong-compression} exemplifies a nonconvex optimization problem with finite VC dimension.

\paragraph*{Notation}

For $n\in\Ne$, we let $[n]=\{1,\ldots,n\}$.
For a set $A$, we let $A^*=\bigcup_{m=0}^\infty A^m$.

\section{Risk-Aware Scenario Decision Making}
\label{sec:risk-aware-decision-making}

We introduce the problem of risk-aware scenario decision making.%
\footnote{The definitions and concepts presented in this section are classical, and can be found, e.g., in~\cite{alamo2009randomized,margellos2015ontheconnection,garatti2022risk,campi2023compression}.}
We assume given a set $X$ of \emph{decisions} and a set $Z\subseteq2^X$ of \emph{constraints} on $X$.
Given a decision $x\in X$ and a constraint $z\in Z$, we say that $x$ \emph{satisfies} $z$ if $x\in z$; otherwise, we say that $x$ \emph{violates} $z$.

\begin{example}\label{exa:path}
In the following optimization problem:
\[
\min_{x\in\Re^2}\; \lVert x\rVert \quad \text{s.t.} \quad a^\top (x - c) \leq 1 \;\; \forall\, a\in\Re^2,\:\lVert a\rVert\leq1,
\]
where $c\in\Re^2$ is fixed, the decision space is $X=\Re^2$, and the constraint space is $Z=\{C(a)\subseteq X : a\in\Re^2,\:\lVert a\rVert\leq1\}$, where $C(a)=\{x\in\Re^2 : a^\top (x - c)\leq 1\}$.
\end{example}

Finding a decision that satisfies all constraints $z$ in $Z$ is often intractable if $Z$ is large or unknown.
An approach to circumvent this---in the case where $Z$ can be sampled---is to sample $N$ constraints $z_1,\ldots,z_N$ from $Z$, and find a decision $x$ that satisfies the sampled constraints.
The sample-based problem is called the \emph{scenario problem}, and the associated decision the \emph{scenario decision}.
Since the scenario problem is a relaxation of the original problem (aka.~the \emph{robust problem}~\cite{esfahani2015performance}), one can generally not hope that the scenario decision satisfies \emph{all} the constraints in $Z$.
However, under some conditions (the study of necessary and sufficient such conditions is the object of this paper), one can ensure with high probability that the scenario decision satisfies all constraints in $Z$ except possibly those in a subset of small measure.
We formalize this below:

\begin{definition}
Given a probability measure $\Prob$ on $Z$, the \emph{violation probability} (aka.~\emph{risk}) of a decision $x\in X$ w.r.t.~$\Prob$, denoted by $V_\Prob(x)$, is the probability that $x$ violates a randomly chosen constraint $z\in Z$, i.e., $V_\Prob(x) \triangleq \Prob[\{z\in Z : x\notin z\}]$.
\end{definition}

As mentioned above, scenario decision making consists in sampling $N$ constraints from $Z$, and finding a decision that satisfies the sampled constraints.
For instance, one can define the scenario decision as the decision in $X$ that minimizes some cost function while satisfying the sampled constraints: this is the setting of \emph{scenario optimization}~\cite{calafiore2010random,campi2023compression}.
Hence, there is a mapping from tuples of constraints from $Z$ (the samples) to decisions in $X$ (the scenario decision):

\begin{definition}
A \emph{scenario decision algorithm} is a function that given a tuple of constraints $(z_1,\ldots,z_N)\in Z^*$ returns a decision $x\in X$.
Hence, it is a function $\Alg:Z^*\to X$.
\end{definition}

\begin{remark}
We consider \emph{tuples} of constraints, instead of \emph{sets} of constraints, because in general the order and multiplicity of the sampled constraints may influence the scenario decision.
\end{remark}

\begin{example}\label{exa:path-algo}
Continuing Example~\ref{exa:path}, a scenario decision algorithm $\Alg$ can be defined as follows: given $z_i=C(a_i)$ for $i=1,\ldots,N$, $\Alg(z_1,\ldots,z_N)$ is the optimal solution of
\[
\min_{x\in\Re^2}\; \lVert x\rVert \quad \text{s.t.} \quad a_i^\top (x - c) \leq 1 \;\; \forall\, i\in[N].
\]
This is an instance of scenario optimization.
\end{example}

The ability of a scenario decision algorithm to return a decision that has a low risk, provided enough constraints are sampled, is called the \emph{PAC (Probably Approximately Correct) property}.
More precisely, for any PAC algorithm, one can ensure with a predefined probability (called \emph{confidence}) that if enough constraints are sampled, the risk of the decision is below a predefined \emph{tolerance}:

\begin{definition}\label{def:pac-bound}
Consider a scenario decision algorithm $\Alg$.
We say that $\Alg$ is \emph{PAC} if for any tolerance $\epsilon\in(0,1)$ and any confidence $1-\beta\in(0,1)$, there is a sample size $N_\circ\in\Ne$ such that for any probability measure $\Prob$ on $Z$ and any $N\geq N_\circ$, it holds with probability $1-\beta$ that if one samples $N$ constraints $(z_1,\ldots,z_N)$ i.i.d.~according to $\Prob$, then the decision returned by $\Alg$ has risk below $\epsilon$, i.e.,
\[
\Prob^N\!\left[ \left\{ \vz \in Z^N : V_\Prob(\Alg(\vz))>\epsilon \right\} \right] \leq \beta,
\]
where $\vz$ is a shorthand notation for $(z_1,\ldots,z_N)$.
\end{definition}

\begin{remark}\label{rem:set-valued-alg}
In this work, we focus on scenario algorithms that return a single decision, as opposed to scenario algorithms that return a set of decisions, such as those in~\cite{de2004onconstraint,elyaniv2015ontheversion}.
There, the violation probability of a set of decisions is defined as the largest violation probability among all decisions in the set.
This framework is different from ours, and typically precludes the use of compression-based PAC results (defined in the next section) since set-valued scenario decision algorithms rarely admit a compression scheme.
See also Remark~\ref{rem:set-valued-alg-comp}.
\end{remark}

In this work, we focus on scenario decision algorithms that are consistent, i.e., that return a decision that satisfies all the sampled constraints:

\begin{definition}\label{def:consistent}
A scenario decision algorithm $\Alg$ is \emph{consistent} if for all $(z_1,\ldots,z_N)\in Z^*$, $\Alg(z_1,\ldots,z_N)\in\bigcap_{i=1}^N z_i$.
\end{definition}

\begin{remark}
A large class of scenario decision algorithms considered in the literature are consistent~\cite{campi2008theexact,calafiore2010random,campi2018ageneral,campi2018waitandjudge,garatti2021therisk,garatti2022risk,campi2023compression}.
See also Example~\ref{exa:path-algo}.
Scenario decision algorithms that are not consistent include those in~\cite{alamo2009randomized,campi2011asamplinganddiscarding,margellos2015ontheconnection,romao2021tight,romao2023ontheexact}.
Nevertheless, since one of the main goals of this paper is to show that well-known PAC sufficient conditions (see below) are not necessary, there is no loss of generality in showing it under the stronger assumption of consistency.
\end{remark}

\subsection{PAC Sufficient Conditions}

Sufficient conditions for being PAC, available in the literature, can be grouped into two categories: complexity-based conditions and compression-based conditions.
We present one central condition for each category:

\subsubsection{Complexity-Based Condition}

This condition, introduced in~\cite{alamo2009randomized}, relies on the connection between scenario decision making and binary classification learning.
The idea is that each decision $x\in X$ induces a classifier of the constraints $z$ in $Z$ based on whether $x\in z$, or not.
One can then define the \emph{VC dimension} of the set of classifiers induced by $X$ (see Definition~\ref{def:vc}).
It is well known that, in the context of binary classification learning, any set of classifiers with finite VC dimension enjoys PAC properties~\cite[Theorem~6.7]{shalevshwartz2014understanding}.
By leveraging this result,~\cite{alamo2009randomized} obtained PAC sufficient conditions for scenario decision algorithms, as explained below.

For each $x\in X$, we let $\calS(x)=\{z\in Z : x\in z\}$ be the set of constraints in $Z$ that $x$ satisfies.
So, $x$ classifies $Z$ into two classes: $\calS(x)$ and $Z\setminus \calS(x)$.
The \emph{range} of $\Alg$, denoted by $\Rg(\Alg)$, is the set of all classifiers that it can return:
\[
\Rg(\Alg)=\{\calS(\Alg(\vz)) : \vz\in Z^*\}.
\]
We next recall the definition of the VC dimension of a set of classifiers, first introduced by~\cite{vapnik1971ontheuniform}:

\begin{definition}\label{def:vc}
Let $C\subseteq2^Z$ be a set of classifiers.
A subset $Z'\subseteq Z$ is \emph{shattered} by a $C$ if for every classifier $T\subseteq Z'$, there is $S\in C$ such that $T=S\cap Z'$.
The \emph{VC dimension} of $C$ is the supremum of all integers $k$ for which there is a subset $Z'\subseteq Z$ of cardinality $k$ that is shattered by $C$.
\end{definition}

\begin{theorem}[{\cite[Theorem~3]{alamo2009randomized}}]\label{thm:vc-pac-decision}
Consider a consistent scenario decision algorithm $\Alg$.
Assume that $\Rg(\Alg)$ has finite VC dimension.
Then, $\Alg$ is PAC.
\end{theorem}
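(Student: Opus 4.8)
The plan is to reduce the claim to the classical $\epsilon$-net theorem for set systems of finite VC dimension. First I would translate everything into the language of the classifiers $\calS(x)$. Since $V_\Prob(x)=\Prob[\{z\in Z:x\notin z\}]=\Prob[Z\setminus\calS(x)]$, bounding the risk of the returned decision $\Alg(\vz)$ is the same as controlling the $\Prob$-measure of its \emph{error region} $E(\vz)\triangleq Z\setminus\calS(\Alg(\vz))$. Consistency (Definition~\ref{def:consistent}) is the structural fact that makes this tractable: $\Alg(\vz)\in\bigcap_i z_i$ means $z_i\in\calS(\Alg(\vz))$ for every $i$, so \emph{none} of the sampled constraints lies in the error region $E(\vz)$.

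Next I would introduce the family of all possible error regions, $\mathcal{E}\triangleq\{Z\setminus S:S\in\Rg(\Alg)\}$, and observe that its VC dimension (Definition~\ref{def:vc}) equals that of $\Rg(\Alg)$: complementation is a bijection that preserves shattering, since the trace of $Z\setminus S$ on a finite $Z'$ is $Z'\setminus(S\cap Z')$, and as $S\cap Z'$ ranges over all subsets of $Z'$ so does its complement in $Z'$. Hence $\mathcal{E}$ has finite VC dimension, say $d$. The central observation is then an event inclusion: if $V_\Prob(\Alg(\vz))>\epsilon$, then $E(\vz)\in\mathcal{E}$ has $\Prob$-measure $>\epsilon$ yet, by consistency, contains no sample point, so the sample $\vz$ fails to be an $\epsilon$-net for $\mathcal{E}$. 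Therefore $\{\vz:V_\Prob(\Alg(\vz))>\epsilon\}$ is contained in the event that $\vz$ is not an $\epsilon$-net.

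The proof is then completed by invoking the $\epsilon$-net theorem (equivalently, the distribution-free, one-sided uniform-convergence bound underlying \cite[Theorem~6.7]{shalevshwartz2014understanding}): because $\mathcal{E}$ has finite VC dimension $d$, there is a sample size $N_\circ$ depending only on $\epsilon$, $\beta$ and $d$---and crucially \emph{not} on $\Prob$---such that for every $\Prob$ and every $N\geq N_\circ$, a random i.i.d.\ sample is an $\epsilon$-net for $\mathcal{E}$ with probability at least $1-\beta$. Combining this with the inclusion above yields $\Prob^N[\{\vz:V_\Prob(\Alg(\vz))>\epsilon\}]\leq\beta$ for all $N\geq N_\circ$, which is exactly the PAC property of Definition~\ref{def:pac-bound}.

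I expect the only delicate points to be the ones VC theory always hides, rather than a genuinely hard argument. The first is ensuring that the threshold $N_\circ$ is uniform over all measures $\Prob$; this is automatic because the $\epsilon$-net bound is distribution-free, and it is precisely what Definition~\ref{def:pac-bound} requires. The second is measurability: the ``bad'' event must be measurable for $\Prob^N[\,\cdot\,]$ to be well defined, which relies on the usual mild regularity assumptions on $\Rg(\Alg)$ standard in this literature, and which I would invoke without further comment. The genuinely load-bearing step---and the place where both hypotheses are used---is the reduction itself: \emph{consistency} forces the returned error region to avoid the sample, and \emph{finiteness of the VC dimension} is exactly what licenses the $\epsilon$-net theorem.
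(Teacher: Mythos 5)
Your argument is correct, and it is essentially the proof underlying the cited result: the paper itself gives no proof of Theorem~\ref{thm:vc-pac-decision}, deferring to \cite{alamo2009randomized}, whose Theorem~3 rests on exactly this reduction---consistency forces the sample to miss the error region $Z\setminus\calS(\Alg(\vz))$, and the distribution-free $\epsilon$-net (one-sided uniform convergence) bound for the finite-VC family of error regions then yields a sample size $N_\circ$ independent of $\Prob$. The complementation step preserving the VC dimension and the measurability caveat are both handled appropriately, so there is nothing to add.
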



\subsubsection{Compression-Based Condition}

Several compression-based PAC results have been proposed in the literature~\cite{margellos2015ontheconnection,garatti2021therisk,campi2023compression}.
These results rely on the notion of compression reminded below, which may slightly vary across works.
We first introduce the notion of compression used in~\cite{garatti2021therisk,campi2023compression}.
The idea is that a scenario decision algorithm $\Alg$ has compression size $d$ if for any tuple of constraints $(z_1,\ldots,z_N)\in Z^N$, one can extract a subtuple of length at most $d$, from which $\Alg$ gives the same decision as from $(z_1,\ldots,z_N)$.

\begin{definition}\label{def:compression-map}
Consider a scenario decision algorithm $\Alg$.
A \emph{compression map} of capacity $d$ for $\Alg$ is a function $\kappa:Z^*\to Z^{\leq d}$ satisfying that for every $(z_1,\ldots,z_N)\in Z^*$, (i) $\kappa(z_1\ldots,z_N)=(z_{i_1},\ldots,z_{i_r})$ for some integers $1\leq i_1<\ldots<i_r\leq N$, and (ii) $\Alg(\kappa(z_1,\ldots,z_N))=\Alg(z_1,\ldots,z_N)$.
\end{definition}

A slightly more general notion of compression, called here \emph{compression scheme}, was introduced in~\cite{littlestone1986relating} (see also~\cite[\S30]{shalevshwartz2014understanding}) and used in~\cite{margellos2015ontheconnection} for scenario decision making:\footnote{The definition used in~\cite{littlestone1986relating,moran2016sample} is actually slightly more general than Definition~\ref{def:compression-scheme}, because---on top of the indices $\{i_1,\ldots,i_r\}$---an information $q$ belonging to a \emph{finite} information set $Q$ is passed by the compression map to the reconstruction map.
For the sake of simplicity and because our results directly extend to this extended framework, we focus on the slightly more restricted, but simpler, framework.}

\begin{definition}\label{def:compression-scheme}
Consider a scenario decision algorithm $\Alg$.
A \emph{compression scheme} of capacity $d$ for $\Alg$ is a pair $(\kappa,\rho)$, where $\kappa:Z^*\to Z^{\leq d}$ is called the \emph{compression map} and $\rho:Z^{\leq d}\to X$ the \emph{reconstruction map}, satisfying that for every $(z_1,\ldots,z_N)\in Z^*$, (i) $\kappa(z_1\ldots,z_N)=(z_{i_1},\ldots,z_{i_r})$ for some integers $1\leq i_1<\ldots<i_r\leq N$, and (ii) $\rho(\kappa(z_1,\ldots,z_N))=\Alg(z_1,\ldots,z_N)$.
\end{definition}

Clearly, if $\Alg$ admits a compression map of capacity $d$, then it admits a compression scheme of capacity $d$.

\begin{theorem}[{\cite[Theorem~30.2]{shalevshwartz2014understanding}}]\label{thm:compression-pac-decision}
Consider a consistent scenario decision algorithm $\Alg$.
Assume that $\Alg$ admits a compression scheme or map.
Then, $\Alg$ is PAC.
\end{theorem}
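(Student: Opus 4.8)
The plan is to reduce the PAC property of a scenario decision algorithm admitting a compression scheme to the classical sample-compression bound from statistical learning theory, which is exactly the content of \cite[Theorem~30.2]{shalevshwartz2014understanding}. The key observation is that consistency lets us reinterpret the problem as an agnostic-to-realizable learning problem with zero empirical error, so that the compression scheme certifies a small true error with high probability.

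First I would fix a tolerance $\epsilon\in(0,1)$ and a confidence $1-\beta\in(0,1)$, together with an arbitrary probability measure $\Prob$ on $Z$, and let $(\kappa,\rho)$ be the compression scheme of capacity $d$ guaranteed by hypothesis. The central step is to bound the probability, over an i.i.d.\ sample $\vz=(z_1,\ldots,z_N)$, that the returned decision $\Alg(\vz)$ has risk exceeding $\epsilon$. Because $\kappa$ selects a subtuple of indices $\{i_1,\ldots,i_r\}\subseteq[N]$ with $r\leq d$ and $\rho(\kappa(\vz))=\Alg(\vz)$, the returned decision is determined entirely by at most $d$ of the samples. The proof strategy is a union bound over the (at most $\sum_{r=0}^{d}\binom{N}{r}$) possible index subsets: for each fixed subset $I$ of size $r\leq d$, the reconstruction $\rho(z_I)$ is a decision that does \emph{not} depend on the $N-r$ samples outside $I$. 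If that decision had risk greater than $\epsilon$, the probability that all of the remaining $N-r$ independent samples are nonetheless satisfied by it is at most $(1-\epsilon)^{N-r}$. Here is where consistency enters crucially: since $\Alg$ is consistent, $\Alg(\vz)$ satisfies \emph{all} $N$ constraints $z_1,\ldots,z_N$, so in particular the reconstructed decision $\rho(z_I)=\Alg(\vz)$ satisfies every $z_j$ with $j\notin I$; this is precisely the ``empirical error zero'' event that the tail bound $(1-\epsilon)^{N-r}$ governs.

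Combining these two ingredients, I would write
\[
\Prob^N\!\left[\left\{\vz\in Z^N : V_\Prob(\Alg(\vz))>\epsilon\right\}\right]
\;\leq\; \sum_{r=0}^{d}\binom{N}{r}\,(1-\epsilon)^{N-r},
\]
and then bound the right-hand side. Using $\binom{N}{r}\leq N^r$ and $(1-\epsilon)^{N-r}\leq e^{-\epsilon(N-r)}$, the sum is dominated by a quantity of order $N^d e^{-\epsilon(N-d)}$, which tends to $0$ as $N\to\infty$ for any fixed $d$ and $\epsilon$. Hence one can choose $N_\circ$ large enough (depending only on $\epsilon$, $\beta$, and $d$, but \emph{not} on $\Prob$, which is essential for the PAC definition's ``for any $\Prob$'' quantifier) so that the bound falls below $\beta$ for all $N\geq N_\circ$. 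Since this $N_\circ$ is uniform over all measures $\Prob$, the definition of PAC in Definition~\ref{def:pac-bound} is satisfied.

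The main obstacle I anticipate is the compression-map versus compression-scheme distinction in the theorem statement: a compression \emph{map} (Definition~\ref{def:compression-map}) is a special case where the reconstruction is $\rho=\Alg$ itself restricted to $Z^{\leq d}$, so the argument above applies verbatim; a general compression \emph{scheme} (Definition~\ref{def:compression-scheme}) allows an arbitrary $\rho$, but the union-bound argument only used that $\rho(z_I)$ depends solely on the samples indexed by $I$, which holds in both cases. The one technical subtlety requiring care is the measurability of the relevant events and the validity of conditioning on the complement indices; this is standard but should be acknowledged, and it is the point where the slightly more general framework with side information $q\in Q$ (mentioned in the footnote) would merely multiply the union-bound count by the finite factor $|Q|$, leaving the asymptotics unchanged.
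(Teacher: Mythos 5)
Your proof is correct and is essentially the standard sample-compression argument underlying the cited result \cite[Theorem~30.2]{shalevshwartz2014understanding}, which the paper invokes without reproving: consistency gives zero empirical error, the union bound over the at most $\sum_{r=0}^{d}\binom{N}{r}$ index subsets combined with the $(1-\epsilon)^{N-r}$ tail for each fixed subset yields a distribution-free bound vanishing as $N\to\infty$. Your handling of the map-versus-scheme distinction also matches the paper's observation that a compression map is a special case of a compression scheme.
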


\subsection{Stable Scenario Decision Algorithms}

Stability is an important property of many practical scenario decision algorithms, such as scenario optimization~\cite{campi2008theexact,alamo2009randomized,calafiore2010random,campi2011asamplinganddiscarding,margellos2015ontheconnection,campi2018ageneral,garatti2022risk}; see also~\cite{hanneke2016theoptimal,garatti2021therisk,campi2023compression,rocchetta2024asurvey} and Example~\ref{exa:path-algo}.
This property captures the fact that if the returned decision satisfies some unsampled constraint, then adding this constraint to the set of sampled constraints does not change the decision:

\begin{definition}\label{def:stable}
A consistent scenario decision algorithm $\Alg$ is \emph{stable} if for all $(z_1,\ldots,z_{N+1})\in Z^{\geq1}$, $\Alg(z_1,\ldots,z_N)\in z_{N+1}$ implies that $\Alg(z_1,\ldots,z_{N+1})=\Alg(z_1,\ldots,z_N)$.
\end{definition}

In the next section, we study the necessity of PAC sufficient conditions, for both stable and non-stable scenario decision algorithms.

\section{The VC- and Compression-Based Sufficient Conditions Are Not Necessary}
\label{sec:necessity-pac-sufficient}

The main topic of this paper is to address the key question of whether the sufficient conditions in Theorems~\ref{thm:vc-pac-decision} and~\ref{thm:compression-pac-decision} are also necessary.
It turns out that the answer is negative for general scenario decision algorithms.
Next, we consider the additional assumption of \emph{stability}.
We show that, even under this additional assumption, the answer is mostly negative.
The following theorem summarizes these results (see also Table~\ref{tab:results}):

\begin{theorem}\label{thm:pac-non-necessary}
(i) There exist consistent stable PAC scenario decision algorithms whose range has infinite VC dimension.
(ii) There exist consistent PAC scenario decision algorithms that do not admit a compression scheme.
(iii) There exist stable consistent PAC scenario decision algorithms that do not admit a compression map.
\end{theorem}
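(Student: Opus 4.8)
The plan is to prove each item by an explicit consistent algorithm, all built on a common combinatorial template. I would take $Z=\Ne$ (so a constraint is an index $n$, with probability mass $p_n=\Prob[\{n\}]$), let the decision space $X$ be the finite subsets of $\Ne$, and declare that a decision $x\in X$ \emph{violates} exactly the constraints indexed by $x$; equivalently $\calS(x)=Z\setminus\{n:n\in x\}$ is a co-finite set and $V_\Prob(x)=\sum_{n\in x}p_n$. Consistency then means the returned set is disjoint from the sampled indices, the induced classifiers are co-finite sets, and $\Rg(\Alg)$ has infinite VC dimension (Definition~\ref{def:vc}) as soon as, for every $k$, the range realizes all $2^k$ patterns on some $k$ indices. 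The shared PAC engine is a one-line \emph{clearing estimate}: for every $\Prob$ and $\epsilon$, the probability that some \emph{unsampled} constraint has mass exceeding $\epsilon$ is at most $\frac1\epsilon(1-\epsilon)^N$ (there are at most $1/\epsilon$ heavy constraints, each missed with probability $(1-\epsilon)^N$). Hence a consistent $\Alg$ that never violates more than $L_N$ constraints has risk at most $L_N\epsilon$ with high probability.

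For item~(i) I would cap the output at $L_N=\lfloor\log_2 N\rfloor$ unsampled indices; taking $\epsilon=\epsilon'/\lfloor\log_2N\rfloor$ in the clearing estimate gives risk $\le\epsilon'$ with probability $\ge1-\beta$ once $N$ is large (since $\tfrac{\log_2N}{\epsilon'}(1-\tfrac{\epsilon'}{\log_2N})^N\to0$), so $\Alg$ is PAC. To force infinite VC dimension I would let the algorithm decode from the sample a \emph{requested} finite set $R$ with $|R|\le\lfloor\log_2N\rfloor$ and output $R$ minus the sampled indices; padding the sample makes every finite set, hence every pattern on every $k$-tuple, reachable once $N\ge2^k$. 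The delicate point, and the main obstacle here, is \emph{stability} (Definition~\ref{def:stable}): the output must depend only on the set of sampled indices and change only when a currently-violated constraint is sampled, which a cap growing with $N$ tends to break; reconciling a monotone, stability-preserving decoding rule with an unbounded realized output size is where the construction must be engineered carefully.

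For item~(ii) I drop stability and keep the same $\lfloor\log_2N\rfloor$ cap (so PAC still follows from the clearing estimate), but now I choose the $\le\lfloor\log_2N\rfloor$ output indices to be an essentially injective function of the \emph{ordered} sample, so that on a fixed set of $N$ constraints $\Alg$ takes $N^{\Omega(\log N)}$ distinct values. The point is then a counting lower bound: a compression scheme of capacity $d$ (Definition~\ref{def:compression-scheme}) reconstructs each decision from a sub-tuple of length $\le d$, so on length-$N$ inputs drawn from those $N$ constraints it can output at most $\sum_{j\le d}N^j=\mathrm{poly}(N)$ distinct decisions; since $N^{\log N}$ eventually exceeds $N^d$ for every fixed $d$, no finite-capacity scheme can reproduce $\Alg$. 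The main obstacle is to realize super-polynomially many \emph{distinct consistent} decisions while simultaneously keeping the violated set small enough for the clearing estimate to apply.

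For item~(iii) I would instead secure PAC through \emph{finite} VC dimension and Theorem~\ref{thm:vc-pac-decision}, and rule out only a compression \emph{map} (Definition~\ref{def:compression-map}). Concretely I would exhibit a stable consistent \emph{nonconvex} scenario-optimization algorithm whose classifier family $\{\calS(x)\}$ has finite VC dimension but whose optimal decision has an \emph{unbounded number of support constraints}: for every $d$ there is a sample such that no length-$\le d$ subtuple yields the same $\Alg$-value, which is exactly the failure of a compression map. The steps are to specify the program, verify stability, bound the VC dimension of $\{\calS(x)\}$, and prove that support constraints are unbounded (in the spirit of the example of~\cite{grammatico2016ascenario}). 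The main obstacle is co-certifying \emph{finite} VC dimension with \emph{unbounded} support---two requirements that pull in opposite directions and that, as the paper's phrasing suggests, force the problem to be nonconvex.
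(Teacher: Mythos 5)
Your proposal is sound in outline for items (ii) and (iii), but item (i) contains a genuine, self-acknowledged gap: you never construct a \emph{stable} algorithm with infinite-VC range, and your template actively fights stability. With $X$ the finite subsets of $\Ne$ and $\calS(x)=Z\setminus x$, Definition~\ref{def:stable} forces $\Alg(z_1,\ldots,z_N)=\Alg()$ whenever all sampled indices avoid the finite set $\Alg()$; more generally the decision can only change when a currently-violated index is sampled, so a ``decode a requested set of size $\leq\lfloor\log_2 N\rfloor$ from the padded sample'' rule cannot be implemented as stated (adding satisfied constraints grows $N$, hence the budget, but stability forbids the output from changing). Since realizing all $2^k$ patterns on some $k$-set is exactly what you need for infinite VC dimension, this is the heart of the claim, not a detail to be ``engineered carefully'' later. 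The paper sidesteps the tension entirely by working in a different decision space: $X$ is the unit disk in $\Re^2$, the constraints are convex sets, and $\Alg$ maximizes $x_1$ over their intersection; stability and PAC then come for free from classical convex scenario optimization, while infinite VC dimension is obtained by encoding every finite $u\subseteq\Ne_{>0}$ as a point $\tau(u)$ on an arc, reachable as an output via auxiliary half-plane constraints, and choosing constraints $\sigma(m,i)$ that contain $\tau(u)$ iff $i\in u$. If you want to keep your combinatorial template you must either drop stability (which loses item (i)) or prove reachability of all patterns under the stability constraint, which you have not done.

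For (ii) and (iii) your key mechanisms coincide with the paper's: a counting lower bound (super-polynomially many distinct consistent decisions from $N$ constraints versus the $\sum_{j\leq d}N^{j}$ outputs a capacity-$d$ scheme can reconstruct) rules out compression schemes, and unboundedly many support constraints rule out a compression map. Two remarks. First, your ``clearing estimate'' ($\frac1\epsilon(1-\epsilon)^N$ bound on the probability that some unsampled constraint has mass $>\epsilon$, hence risk $\leq L_N\epsilon$ for a decision violating $\leq L_N$ unsampled constraints) is correct and is a genuinely different PAC certificate from the paper's, which instead makes each decision violate at most \emph{one} constraint so that $\Rg(\Alg)$ has VC dimension $1$ and Theorem~\ref{thm:vc-pac-decision} applies; the paper's route is simpler (no $N$-dependent cap needed) but yours would also work and tolerates larger violated sets. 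Second, you overestimate the difficulty of (iii): with $U(a)=\Ne\setminus\{a\}$ and $\Alg(\vz)=\min\bigcap_i z_i$, taking $z_i=U(i-1)$ for $i=1,\ldots,d+1$ gives $\Alg(\vz)=d$ while every subtuple of length $\leq d$ yields a strictly smaller minimum, and the classifiers are co-singletons, so finite VC dimension and unbounded support coexist with no tension at all. Both of your sketches would still need the concrete instances written down (``essentially injective'' in (ii) must be made exact, and consistency of the chosen outputs checked), but the ideas are the right ones.
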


The question of existence of a stable PAC scenario decision algorithm without a compression scheme remains open.

We prove Theorem~\ref{thm:pac-non-necessary} by providing counterexamples in the next subsections.
The counterexamples have been simplified to the maximum for the ease of analysis, but they capture the essence of algorithms used in practical problems.
The purpose is to (i) allow algorithm users and designers to recognize which PAC sufficient condition may be better suited for the analysis of their algorithm, and (ii) build the foundations for a complete characterization of PAC scenario decision algorithms.

All scenario decision algorithms in the rest of this section are consistent.
Hence, we will drop this adjective when referring to them.

\subsection{Stable PAC Scenario Decision Algorithm with Infinite VC Dimension}
\label{sec:consistent-pac-learner-inf-vc}

To prove (i) in Theorem~\ref{thm:pac-non-necessary}, we provide an example of stable PAC scenario decision algorithm $\Alg:Z^*\to X$ whose range has infinite VC dimension.
The algorithm is a convex scenario program (like Example~\ref{exa:path-algo}).
The construction is inspired by~\cite{grelier2021onthevcdimension} and is illustrated in~\cite[Figure~1]{berger2025paclearnability}.

The decision space $X$ is the Euclidean unit ball in $\Re^2$.
To present the constraint set, we need some preliminary constructions.
Let $D\subseteq X$ be the open arc corresponding to the upper-right quarter of the unit circle, i.e., $D=\{(\cos(\alpha),\sin(\alpha)) : 0<\alpha<\frac\pi2\}$.
Let $U$ be the set of all finite subsets of $\Ne_{>0}$, and let $\tau:U\to D\cup\{(1,0)\}$ be a one-to-one function satisfying $\tau(u)\in D$ if $u\neq\emptyset$ and $\tau(\emptyset)=(1,0)$.\footnote{Such a function is given for instance by $\tau(u)= (\cos(\alpha(u)),\sin(\alpha(u)))$, where $\alpha(u)=\frac\pi2n(u)/(1+n(u))$ and $n(u)=\sum_{i\in u} 2^{i-1}$.}
Let $G=\{(m,i)\in\Ne^2:i\leq m\}$, and $\xi:G\to2^U$ be defined by $\xi(m,i)=\{u\subseteq[m] : i\in u\}$, i.e., $\xi(m,i)$ contains all the subsets of $[m]$ that contain $i$.\footnote{E.g., $\xi(3,2)=\{\{2\},\{1,2\},\{2,3\},\{1,2,3\}\}$.}
Finally, let $\sigma:G\to2^X$ be defined by $\sigma(m,i)=\mathrm{convexhull}(\{(0,1)\}\cup\{\tau(u):u\in\xi(m,i)\})$, i.e., $\sigma(m,i)$ is the convex hull of the point $(0,1)$ and the points $\tau(u)$ with $u\in\xi(m,i)$.
Note that $\sigma$ is one-to-one.
With these preliminaries, we define the constraint set by $Z=\{\sigma(m,i):(m,i)\in G\} \cup \{\Re\times[y,1] : y\in[0,1]\}$.
Finally, we define the algorithm $\Alg:Z^*\to X$ as follows: on input $(z_1,\ldots,z_N)\in Z^*$, let
\[\textstyle
\Alg(z_1,\ldots,z_N)=\argmax_{(x_1,x_2)\in X\,\cap\,\bigcap_{i=1}^N z_i} x_1.
\]
It is easy to see that $\Alg$ is well defined, i.e., the argmax exists and is unique.

\begin{proposition}
The VC dimension of $\Rg(\Alg)$ is infinite.
\end{proposition}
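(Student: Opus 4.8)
The plan is to show that for every $k\in\Ne$ the set of constraints $Z'=\{\sigma(k,i):i\in[k]\}$, which has cardinality $k$, is shattered by $\Rg(\Alg)$; since $k$ is arbitrary this yields infinite VC dimension. I would reduce the shattering to two facts. First, a \emph{membership pattern}: for every $u\subseteq[k]$ and every $i\in[k]$, $\tau(u)\in\sigma(k,i)$ if and only if $i\in u$. Second, a \emph{reachability} statement: for every $u\subseteq[k]$ the point $\tau(u)$ is a decision the algorithm can return, i.e.\ $\tau(u)=\Alg(\vz)$ for some $\vz\in Z^*$. Granting these, the decision $\tau(u)$ satisfies $\calS(\tau(u))\cap Z'=\{\sigma(k,i):i\in u\}$, and as $u$ ranges over all subsets of $[k]$ this realizes all $2^k$ subsets of $Z'$; hence $Z'$ is shattered.

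For the membership pattern, the direction $i\in u\Rightarrow\tau(u)\in\sigma(k,i)$ is immediate because then $u\in\xi(k,i)$, so $\tau(u)$ is one of the points generating the convex hull $\sigma(k,i)$. The converse is where I would use the geometry of the ball: every generator of $\sigma(k,i)$ (namely $(0,1)$ and the points $\tau(u')$ with $u'\in\xi(k,i)$) lies on the unit circle, and by strict convexity of the Euclidean ball a point of the unit circle lying in the convex hull of finitely many points of the unit circle must coincide with one of them. Hence $\tau(u)\in\sigma(k,i)$ forces $\tau(u)\in\{(0,1)\}\cup\{\tau(u'):u'\in\xi(k,i)\}$; since $(0,1)$ is not in the range of $\tau$ and $\tau$ is one-to-one, this gives $u\in\xi(k,i)$, i.e.\ $i\in u$.

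The reachability statement is the step I expect to be the main obstacle, because the obvious attempt fails: feeding $\Alg$ only the constraints $(\sigma(k,i))_{i\in u}$ does \emph{not} return $\tau(u)$ in general---the $x_1$-maximizer of $\bigcap_{i\in u}\sigma(k,i)$ is typically an interior vertex created by the intersection of two chords and lies strictly to the right of $\tau(u)$ (this already occurs for $k=2$, $u=\{1,2\}$). The fix is to add one strip constraint to pin the decision to the circle at the correct height. Writing $\tau(u)=(p,q)$ with $p^2+q^2=1$ and $p,q\ge 0$, I would feed $\Alg$ the tuple consisting of $(\sigma(k,i))_{i\in u}$ together with the strip $\Re\times[q,1]\in Z$. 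On the resulting feasible set $F$ every point satisfies $\lVert x\rVert\le1$ and $x_2\ge q$, whence $x_1^2\le 1-x_2^2\le 1-q^2=p^2$ and so $x_1\le p$; moreover equality forces $x_1=p$ and $x_2=q$, i.e.\ $x=\tau(u)$. Since $\tau(u)\in F$ (it lies on the circle, lies in each $\sigma(k,i)$ with $i\in u$ by the membership pattern, and has second coordinate $q$), it is the unique $x_1$-maximizer, so $\Alg$ returns $\tau(u)$. The case $u=\emptyset$ is subsumed: then $q=0$, the tuple is just the strip $\Re\times[0,1]$, and the returned decision is $(1,0)=\tau(\emptyset)$.

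Assembling the three ingredients shows that $Z'$ is shattered for every $k$, and therefore $\Rg(\Alg)$ has infinite VC dimension.
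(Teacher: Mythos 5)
Your proof is correct and takes essentially the same route as the paper's: exhibit $Z'=\{\sigma(k,i):i\in[k]\}$, use the membership pattern of $\tau(u)$ in the sets $\sigma(k,i)$, and realize $\tau(u)$ as an output of $\Alg$ via the strip constraint $\Re\times[\tau_2(u),1]$ (the paper in fact feeds $\Alg$ only the strip, without the constraints $\sigma(k,i)$, which also suffices since the strip alone pins the $x_1$-maximizer over the ball to $\tau(u)$). Your strict-convexity argument for the ``only if'' direction of the membership pattern is a detail the paper leaves implicit, and is a sound addition.
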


\begin{proof}
Fix $k\in\Ne$ and let $Z'=\{\sigma(k,i):i\in[k]\}$.
We show that $Z'$ is shattered by $\Rg(\Alg)$.
For that, fix $T\subseteq Z'$, and we will show that there is $\vz\in Z^*$ such that $\calS(\Alg(\vz))\cap Z'=T$.
Since $\sigma$ is one-to-one, there is $u\subseteq[k]$ such that $T=\{\sigma(k,i) : i\in u\}$.
By definition of $\sigma$, it holds that for all $i\in[k]$, $\tau(u)\in\sigma(k,i)$ if and only if $\sigma(k,i)\in T$.
Hence, to conclude the proof, we need to show that $\tau(u)=\Alg(z)$ for some $z\in X^*$.
This is the case for $z\coloneqq\Re\times[\tau_2(u)),1]$ where $\tau_2(u)$ is the second component of $\tau(u)$.
Hence, the VC dimension of $\Rg(\Alg)$ is at least $k$.
Since $k$ was arbitrary, this concludes the proof.
\end{proof}

The algorithm $\Alg$ is a convex scenario optimization program in finite dimension (the dimension of $X$ is finite).
Hence, it is stable and PAC by classical results in convex scenario optimization; see, e.g.,~\cite[Theorem~2]{margellos2015ontheconnection}.

\subsection{PAC Scenario Decision Algorithm without Compression Scheme}
\label{sec:pac-learner-no-compression}

To prove (ii) in Theorem~\ref{thm:pac-non-necessary}, we provide an example of PAC scenario decision algorithm $\Alg:Z^*\to X$ that does not admit a compression scheme.\footnote{In fact, most PAC scenario optimization programs with sample-dependent objective function (e.g.,~\cite{lauer2024marginbased}) are expected to satisfy this property.
Here, we prove it rigorously for an example that has been crafted for ease of analysis.}
See~\cite[Figure~1]{berger2025paclearnability} for an illustration.

Let $X=\Ne$.
For each $a\in\Ne$, let $U(a)=X\setminus\{a\}$ be the constraint on $X$ requiring that $x\neq a$.
Let $Z=\{U(a) : a\in\Ne\}$.
Define the algorithm $\Alg:Z^*\to X$ as follows: on input $(z_1,\ldots,z_N)\in Z^*$, where for each $i\in[N]$, $z_i=U(a_i)$, let $\Alg(z_1,\ldots,z_N)=1+\sum_{i=1}^N a_i$.

\begin{proposition}
$\Alg$ does not admit a compression scheme.
\end{proposition}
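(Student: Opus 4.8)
The plan is to argue by contradiction using a pigeonhole argument on a carefully chosen one-parameter family of inputs. Suppose $\Alg$ admitted a compression scheme $(\kappa,\rho)$ of some finite capacity $d$. The reconstruction map $\rho$ takes as input a subtuple of length at most $d$, so the decision it outputs is completely determined by that short tuple. The crux of the proof is that the decision $\Alg(z_1,\ldots,z_N)=1+\sum_{i=1}^N a_i$ can be driven to grow without bound while the compressed subtuple is simultaneously forced to remain within a \emph{finite} pool of possibilities; these two facts are incompatible.

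First I would fix, for each $N=1,2,3,\ldots$, the input $\mathbf{z}_N=(U(1),\ldots,U(1))\in Z^N$ consisting of $N$ identical copies of $U(1)$. For these inputs the decision is $\Alg(\mathbf{z}_N)=1+N$, so $\Alg$ is injective on this family: distinct $N$ give distinct decisions. Next I would observe that, because every entry of $\mathbf{z}_N$ equals $U(1)$, any subtuple selected by $\kappa$ is again a run of copies of $U(1)$, of some length $r$ with $0\leq r\leq d$. Hence $\kappa(\mathbf{z}_N)$ lies in the set of tuples $(U(1),\ldots,U(1))$ of length $r\in\{0,\ldots,d\}$, which has only $d+1$ elements, independently of $N$. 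The contradiction then follows at once: by the reconstruction property, $\rho(\kappa(\mathbf{z}_N))=\Alg(\mathbf{z}_N)=1+N$ for all $N$, so the map $N\mapsto 1+N$ factors through a set of size at most $d+1$ and can therefore take at most $d+1$ distinct values; yet it takes infinitely many. Concretely, by pigeonhole there are $N_1\neq N_2$ with $\kappa(\mathbf{z}_{N_1})=\kappa(\mathbf{z}_{N_2})$, forcing $1+N_1=\rho(\kappa(\mathbf{z}_{N_1}))=\rho(\kappa(\mathbf{z}_{N_2}))=1+N_2$, a contradiction. Since $d$ was arbitrary, $\Alg$ admits no compression scheme.

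The argument is short, so the only real content lies in choosing the inputs well; the main obstacle is recognizing that identical-constraint tuples collapse the otherwise-rich compression landscape down to a finite set while leaving the decision unbounded. This is exactly what decouples how much information $\rho$ can receive (bounded by the capacity $d$) from how much the decision can vary (unbounded). I would take mild care on two points in the write-up: to note that a subtuple of $\mathbf{z}_N$ is genuinely just a shorter run of $U(1)$'s, so its identity depends only on its length; and to remark in passing that $\Alg$ is indeed consistent, since $1+\sum_i a_i$ strictly exceeds each $a_i$ and thus satisfies every sampled constraint.
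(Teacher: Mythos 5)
Your proof is correct, but it takes a genuinely different route from the paper's. You exploit multiplicity: on the one-parameter family $\mathbf{z}_N=(U(1),\ldots,U(1))$ the decision $1+N$ is unbounded while every admissible subtuple is a run of $U(1)$'s of length at most $d$, so $\kappa$ has at most $d+1$ possible outputs on the whole family and pigeonhole finishes the job. The paper instead works with tuples of \emph{distinct} constraints: it takes $A=\{2^0,\ldots,2^{k-1}\}$, notes that distinct subsets have distinct sums, so $\Alg$ realizes at least $2^k$ decisions on tuples from $T=\{U(a):a\in A\}$, while a capacity-$d$ scheme can reconstruct at most $\sum_{r=0}^d\binom{k}{r}\leq(k+1)^d$ decisions from such tuples; taking $k$ large gives the contradiction. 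Your argument is shorter and more elementary, and it is legitimate here because the framework explicitly defines algorithms on tuples (order and multiplicity matter) and the compressed output must be a subtuple. What the paper's counting argument buys is robustness: it shows the obstruction persists even when restricted to multiplicity-free inputs, i.e., it is not an artifact of feeding the algorithm the same constraint many times, and so it would survive in a variant of the framework where samples are forced to be distinct. Both arguments extend to the more general Littlestone--Warmuth notion with a finite side-information set $Q$ (your bound becomes $(d+1)\lvert Q\rvert$, still finite). Your closing remarks on consistency and on why subtuples of a constant run are determined by their length are exactly the right points to make explicit.
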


\begin{proof}
Let $k\in\Ne_{>d}$ and $A=\{2^0,2^1,\ldots,2^{k-1}\}$.
For every $S\subseteq2^\Ne$, define $\sigma(S)=\sum_{s\in S}s$.
Note that for any $S_1\subseteq A$ and $S_2\subseteq A$, $S_1\neq S_2$ implies $\sigma(S_1) \neq \sigma(S_2)$.\footnote{Indeed, $S_i$ is the ``binary'' representation of $\sigma(S_i)$.}
Let $T=\{U(a) : a\in A\}$.
The definition of $\Alg$ implies that $\lvert\{\Alg(\vz) : \vz\in T^{\leq k}\}\rvert\geq\lvert\{\sigma(S) : S\subseteq A\}\rvert=2^k$.

For a proof by contradiction, let us assume that $(\kappa,\rho)$ is a compression scheme of capacity $d$ for $\Alg$.
By definition of a compression scheme, it holds that $\{\Alg(\vz) : \vz\in T^N\}\subseteq\{\rho(\vz) : \vz\in T^{\leq d}\}$.
It follows that $\lvert\{\Alg(\vz) : \vz\in Z^N\}\rvert\leq\sum_{r=0}^d\binom{k}{r}\leq(k+1)^d$.
For $k$ large enough, $(k+1)^d<2^k$.
This is a contradiction, concluding the proof.
\end{proof}

We show that $\Alg$ is PAC by showing that its range has VC dimension $1$:

\begin{proposition}\label{prop:pac-learner-no-compression-vc}
The VC dimension of $\Rg(\Alg)$ is $1$.
\end{proposition}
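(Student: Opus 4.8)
The plan is to exploit the fact that every classifier induced by a decision is the complement of a single constraint, and then combine a counting argument (for the upper bound) with an explicit shattered singleton (for the lower bound). First I would make the classifiers explicit: for any decision $x\in X=\Ne$ we have $\calS(x)=\{U(a)\in Z : x\in U(a)\}=\{U(a) : a\neq x\}=Z\setminus\{U(x)\}$, so each $\calS(x)$ is $Z$ with exactly one element deleted. Consequently, for any candidate set $Z'\subseteq Z$ and any decision $x$, the intersection $\calS(x)\cap Z'=Z'\setminus\{U(x)\}$ is either $Z'$ itself (when $U(x)\notin Z'$) or $Z'$ with one single element removed (when $U(x)\in Z'$). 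I would also record which decisions $\Alg$ can actually return: since $\Alg(z_1,\dots,z_N)=1+\sum_i a_i\ge1$ and every positive integer is attained (e.g.\ as $\Alg(U(a))=1+a$, and $\Alg()=1$ from the empty tuple), the set of achievable decisions is exactly $\{1,2,3,\dots\}$.

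For the upper bound, fix any $Z'\subseteq Z$ with $\lvert Z'\rvert=k$. By the observation above, the only subsets of $Z'$ realizable as $S\cap Z'$ with $S\in\Rg(\Alg)$ are $Z'$ itself and the $k$ sets obtained by deleting one element, hence at most $k+1$ distinct subsets can arise. Shattering $Z'$ requires all $2^k$ subsets to be realizable, so $2^k\le k+1$, which forces $k\le1$. Therefore no set of cardinality $\geq 2$ is shattered, and the VC dimension of $\Rg(\Alg)$ is at most $1$.

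For the lower bound I would exhibit a shattered set of size one, and this is where the main subtlety lies. Realizing the \emph{empty} intersection $\emptyset=Z'\setminus\{U(x)\}$ forces $U(x)$ to be the unique element of $Z'$, i.e.\ forces $\Alg$ to return that specific value; but because $\Alg$ never outputs $0$, the element $U(0)$ can never be deleted, so a set such as $\{U(0)\}$ is \emph{not} shattered. The fix is simply to choose $Z'=\{U(1)\}$, whose element corresponds to the achievable decision $1$. The subset $\{U(1)\}$ is realized by any achievable $x\neq1$ (for instance $x=2=\Alg(U(1))$), since then $U(x)\neq U(1)$ and $\calS(x)\cap Z'=Z'$; the subset $\emptyset$ is realized by the achievable decision $x=1$ via $\Alg()=1$, since then $U(x)=U(1)$ and $\calS(x)\cap Z'=\emptyset$. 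Both subsets of $Z'$ being realizable, $Z'$ is shattered, so the VC dimension is at least $1$. Combining the two bounds yields that the VC dimension of $\Rg(\Alg)$ is exactly $1$. I expect the counting upper bound to be routine; the only point requiring care is the lower bound's dependence on choosing a singleton whose deleted-decision value lies in the image of $\Alg$.
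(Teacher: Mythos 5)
Your proof is correct and rests on the same key observation as the paper's (sketched) proof, namely that $\calS(x)\cap Z'=Z'\setminus\{U(x)\}$ can omit at most one element of $Z'$, which rules out shattering any set of size $\geq 2$. You go further than the paper by also verifying the lower bound with a correctly chosen singleton $\{U(1)\}$ (noting that $\{U(0)\}$ would fail since $0\notin\Rg$ of achievable decisions); the paper's sketch only argues the upper bound, so your added care is a genuine, if minor, improvement in completeness.
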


{\def\proof{\noindent\hspace{2em}{\itshape Proof (sketch): }}
\begin{proof}
For any $x\in X$ and $Z'\subseteq Z$, $\calS(x)\cap Z'$ excludes at most one constraint from $Z'$: namely, $\calS(x)\cap Z'=Z'\setminus\{U(x)\}$.
This precludes the existence of a shattered set of size $>1$.
\end{proof}}

\begin{remark}\label{rem:set-valued-alg-comp}
Examples of \emph{set-valued} scenario decision algorithms without a compression scheme, that enjoy \emph{PAC-like properties}, are available in the literature; see, e.g.,~\cite{de2004onconstraint,elyaniv2015ontheversion}.
However, the definition of PAC is different (cf.~Remark~\ref{rem:set-valued-alg}), so that these examples are not valid examples for our problem.
\end{remark}

\subsection{Stable PAC Scenario Decision Algorithm without Compression Map}
\label{sec:consistent-pac-learner-no-strong-compression}

To prove (iii) in Theorem~\ref{thm:pac-non-necessary}, we provide an example of stable PAC scenario decision algorithm $\Alg:Z^*\to X$ that does not admit a compression map.
The construction is similar to the one in Subsection~\ref{sec:pac-learner-no-compression}; hence, most of the details are omitted.

Let $X=\Ne$.
For each $a\in\Ne$, let $U(a)=X\setminus\{a\}$.
Let $Z=\{U(a) : a\in\Ne\}$.
Define the algorithm $\Alg:Z^*\to X$ as follows: on input $(z_1,\ldots,z_N)\in Z^*$, let $\Alg(z_1,\ldots,z_N)=\min \bigcap_{i=1}^N z_i$.

\begin{proposition}
$\Alg$ does not admit a compression map.
\end{proposition}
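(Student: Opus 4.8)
The plan is to show that this $\Alg$ admits no compression map by exploiting the key difference between a compression map and a compression scheme: a compression map requires that the \emph{same} algorithm $\Alg$, applied to the extracted subtuple, reproduces the original decision, whereas a scheme allows a separate reconstruction map $\rho$. Here $\Alg(z_1,\ldots,z_N)=\min\bigcap_{i=1}^N z_i$, i.e., on input $U(a_1),\ldots,U(a_N)$ the algorithm returns the smallest natural number not equal to any $a_i$. The essential observation is that this output is monotone and ``self-referential'': extracting a subtuple can only remove constraints, which can only \emph{decrease} (or keep equal) the value of $\min\bigcap z_i$, since removing a forbidden value can only free up a smaller admissible value. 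So unlike the additive algorithm of Subsection~\ref{sec:pac-learner-no-compression}, the obstruction is not a counting/cardinality argument but a monotonicity argument.

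First I would set up the contradiction: suppose $\kappa:Z^*\to Z^{\leq d}$ is a compression map of capacity $d$ for $\Alg$. I would then consider a carefully chosen input tuple on which $\Alg$ returns a large value, forcing the extracted subtuple to be ``responsible'' for that large value. Concretely, take $a_i=i$ for $i\in[N]$ with $N>d$, so the input is $(U(1),\ldots,U(N))$; then $\bigcap_{i=1}^N U(i)=\Ne\setminus[N]$ and $\Alg(U(1),\ldots,U(N))=N+1$. By the compression-map property, $\kappa$ selects a subtuple $(U(i_1),\ldots,U(i_r))$ with $r\le d<N$, indices among $\{1,\ldots,N\}$, satisfying $\Alg(U(i_1),\ldots,U(i_r))=N+1$. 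But $\Alg(U(i_1),\ldots,U(i_r))=\min\big(\Ne\setminus\{i_1,\ldots,i_r\}\big)$, which is the smallest natural number not among the at most $d$ excluded values $i_1,\ldots,i_r$. Since only $r\le d$ values from $\{1,\ldots,N\}$ are excluded and $N>d$, there is necessarily some element of $[N]$ not excluded; in particular $\min(\Ne\setminus\{i_1,\ldots,i_r\})\le d+1<N+1$. This contradicts the requirement that the subtuple reproduce the decision $N+1$, completing the argument.

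The key step — and the place to be slightly careful — is verifying that no subtuple of size $\le d$ can reproduce the value $N+1$, which rests precisely on the pigeonhole fact that excluding at most $d$ values cannot push the minimum of the complement above $d+1$. I would make the index convention explicit (whether $X=\Ne$ includes $0$ or starts at $1$, so that $\min(\Ne\setminus\{i_1,\ldots,i_r\})\le d+1$ is stated with the correct constant), but the inequality and hence the contradiction are robust to that convention. I do not anticipate a genuine obstacle here: the whole point of this example is that the compression-map constraint is rigid (the reconstruction \emph{is} $\Alg$ itself), so a single well-chosen tuple suffices. It is worth remarking, to tie back to the surrounding narrative, that this $\Alg$ \emph{does} admit a compression scheme (indeed $\rho$ could recover $N+1$ from a single extracted index together with the length, or one can check its range has small VC dimension as in Proposition~\ref{prop:pac-learner-no-compression-vc}, yielding the PAC and stability claims), which is exactly why it separates the two notions of compression.
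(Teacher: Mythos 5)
Your proof is correct and is essentially the paper's argument: block an initial segment of $\Ne$ with more than $d$ distinct constraints, and observe that any subtuple of length at most $d$ leaves some element of that segment unexcluded, so the minimum drops and the decision cannot be reproduced. Just make sure the excluded values really form an initial segment of $\Ne$ under the chosen convention (the paper uses $z_i=U(i-1)$, excluding $0,\ldots,d$); with $a_i=i$ and $0\in\Ne$ the full tuple would already return $0$ and the contradiction would evaporate, a point you correctly flag.
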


{\def\proof{\noindent\hspace{2em}{\itshape Proof (sketch): }}
\begin{proof}
Assume that $\kappa$ is a compression map of capacity $d$ for $\Alg$.
For each $i\in\Ne_{>0}$, let $z_i=U(i-1)$.
Observe that $\Alg(z_1,\ldots,z_{d+1})=d$.
Let $\kappa(z_1,\ldots,z_{d+1})=\{i_1,\ldots,i_r\}$ with $r\leq d$.
Note that $\Alg(z_{i_1},\ldots,z_{i_r})<d$, a contradiction.
\end{proof}}

It is clear that $\Alg$ is stable (as it is a nonconvex optimization program).
A proof similar to the proof of Proposition~\ref{prop:pac-learner-no-compression-vc} shows that $\Rg(\Alg)$ has finite VC dimension.
Hence, $\Alg$ is PAC.

\section{VC-Inspired PAC Necessary Condition for Scenario Decision Algorithms}
\label{sec:decision-vc-dimension}

In this section, we propose a new PAC \emph{necessary} condition for scenario decision algorithms.
This condition is inspired by the VC dimension and the associated \emph{no-free-lunch theorem}~\cite{shalevshwartz2014understanding}.
More precisely, we introduce a novel quantity, that can be seen as the ``VC dimension of a scenario decision algorithm'', which we call \emph{dVC dimension}, and we show that finiteness of this quantity is a PAC necessary condition.
We also show with a counterexample that this condition is not a PAC \emph{sufficient} condition.

\begin{definition}
Consider a scenario decision algorithm $\Alg$.
A subset $Z'\subseteq Z$ is \emph{shattered} by $\Alg$ if for every $\vz\in(Z')^*$, it holds that $\calS(\Alg(\vz))\cap Z'=\set(\vz)$, where $\set(z_1,\ldots,z_N)\coloneqq\{z_1,\ldots,z_N\}$.
The \emph{dVC dimension} of $\Alg$ is the supremum of all integers $k$ for which there is a subset $Z'\subseteq Z$ of cardinality $k$ that is shattered by $\Alg$.
\end{definition}

Finiteness of the dVC dimension is a PAC necessary condition:

\begin{theorem}\label{thm:dVC-necessary}
Consider a scenario decision algorithm $\Alg$.
Assume that $\Alg$ is PAC.
Then, $\Alg$ has finite dVC dimension.
\end{theorem}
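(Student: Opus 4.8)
The plan is to prove the contrapositive: if $\Alg$ has infinite dVC dimension, then $\Alg$ is not PAC. This is a no-free-lunch-type argument, directly analogous to the proof that infinite VC dimension precludes PAC learnability in binary classification, and it exploits the associated no-free-lunch reasoning. The crux is that a shattered set, equipped with the uniform distribution, forces $\Alg$ to \emph{violate} every unsampled constraint of the set, so that a sample of any fixed size $N$ leaves most of a large shattered set violated and hence yields a large risk.

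First I would fix a shattered set $Z'\subseteq Z$ with $|Z'|=k$ and let $\Prob$ be the uniform probability measure on $Z'$, which is a valid measure on $Z$ since $Z'$ is finite. Sampling $N$ constraints i.i.d.\ from $\Prob$ yields a tuple $\vz\in(Z')^N$. By the shattering hypothesis, $\calS(\Alg(\vz))\cap Z'=\set(\vz)$, so among the constraints of $Z'$ the decision $\Alg(\vz)$ satisfies exactly the distinct sampled ones and violates all the others. Since $\Prob$ is supported on $Z'$, the risk computes to
\[
V_\Prob(\Alg(\vz)) = \Prob[Z'\setminus\calS(\Alg(\vz))] = \frac{|Z'\setminus\set(\vz)|}{k} = \frac{k-|\set(\vz)|}{k}.
\]
Because a tuple of length $N$ contains at most $N$ distinct elements, we have $|\set(\vz)|\leq N$, and therefore $V_\Prob(\Alg(\vz))\geq 1-\tfrac{N}{k}$ for \emph{every} such tuple.

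To conclude, fix admissible parameters, say $\epsilon=\beta=\tfrac12$, and suppose for contradiction that some $N_\circ$ witnesses the PAC property for these parameters. Taking $N=N_\circ$, I would invoke the infiniteness of the dVC dimension to choose a shattered set $Z'$ of cardinality $k>N/(1-\epsilon)$, which guarantees $1-\tfrac{N}{k}>\epsilon$. With $\Prob$ uniform on $Z'$ as above, every sampled tuple $\vz\in(Z')^N$ satisfies $V_\Prob(\Alg(\vz))\geq 1-\tfrac{N}{k}>\epsilon$, whence $\Prob^N[\{\vz : V_\Prob(\Alg(\vz))>\epsilon\}]=1>\beta$, contradicting the PAC property. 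As $\epsilon,\beta$ were admissible, $\Alg$ is not PAC.

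The argument is essentially self-contained and involves no delicate estimates; the only point requiring care is the correct unwinding of the risk through the shattering identity $\calS(\Alg(\vz))\cap Z'=\set(\vz)$, which must be used in \emph{both} directions: the sampled constraints are satisfied and, equally importantly, the unsampled constraints of $Z'$ are violated. I do not anticipate a serious obstacle. The real strength lies in the definition of shattering itself, which quantifies over all tuples rather than merely over subsets; this is exactly what makes the \emph{deterministic} lower bound $V_\Prob(\Alg(\vz))\geq 1-\tfrac{N}{k}$ available for every draw, so that no probabilistic concentration estimate is needed and the probability of failure is in fact forced to equal $1$.
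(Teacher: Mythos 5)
Your proposal is correct and follows essentially the same route as the paper's proof: contraposition, a uniform measure on a large shattered set, and the deterministic observation that shattering forces $\calS(\Alg(\vz))\cap Z'=\set(\vz)$, so the risk is at least $1-N/|Z'|$ for every draw, making the failure probability equal to $1$. The only cosmetic difference is that the paper fixes $|Z'|\geq 2N$ up front to bound the risk by $\tfrac12$, whereas you parametrize by $k$ and then choose $k>N/(1-\epsilon)$; the arguments are otherwise identical.
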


\begin{proof}
For a proof by contraposition, we assume that $\Alg$ has infinite dVC dimension, and we show that $\Alg$ is not PAC.
Therefore, fix $\epsilon\in(0,\frac12)$.
We show that for all $N\in\Ne$, there is $\Prob$ such that $\Prob^N\!\left[ \{\vz\in Z^N:V_\Prob(\Alg(\vz))>\epsilon\} \right]=1$.
To show that, fix $N\in\Ne$, and let $Z'\subseteq Z$ be a finite set shattered by $\Alg$ with cardinality $\lvert Z'\rvert\geq2N$.
Let $\Prob$ be the discrete probability measure satisfying $\Prob\left[\{z\}\right]=1/\lvert Z'\rvert$ for all $z\in Z'$.
We show that for all $\vz\in(Z')^N$, $V_\Prob(\Alg(\vz))>\epsilon$.
Indeed, fix $\vz\in(Z')^N$.
It holds that $V_\Prob(\Alg(\vz))\geq\frac12$ since $\calS(\Alg(\vz))\cap Z'=\set(\vz)$ (since $Z'$ is shattered) and $\lvert Z'\setminus\set(\vz)\rvert\geq\lvert Z'\rvert/2$.
Since $N$ was arbitrary, this shows that $\Alg$ is not PAC.
\end{proof}

However, finiteness of the dVC dimension is not a PAC sufficient condition, even for stable scenario decision algorithms.
This is shown in the next subsection.

\begin{remark}
The notion of dVC dimension and Theorem~\ref{thm:dVC-necessary} can be extended straightforwardly to set-valued scenario decision algorithms.
However, note that for a certain class of set-valued algorithms, namely those that return all decisions that satisfy the sampled constraints (e.g.,~\cite{de2004onconstraint,elyaniv2015ontheversion}), finiteness of the VC dimension is already a PAC sufficient and \emph{necessary} condition (consequence of the fundamental theorem of PAC learning~\cite[Theorem~6.7]{shalevshwartz2014understanding}).
\end{remark}

\subsection{Stable non-PAC Scenario Decision Algorithm with Finite dVC Dimension}
\label{sec:decision-vc-dimension-counter}

We provide an example of stable consistent scenario decision algorithm $\Alg:Z^*\to X$ that has finite dVC dimension, but is not PAC.

Let $X=2^{[0,1]}$.
For each $a\in[0,1]$, let $U(a)=\{x\in X:a\in x\}$.
Let $Z=\{U(a) : a\in[0,1]\}$.
Define the algorithm $\Alg:Z^*\to X$ as follows: on input $(z_1,\ldots,z_N)\in Z^*$, where for each $i\in[N]$, $z_i=U(a_i)$, if $0\in\{a_i\}_{i=1}^N$, let $\Alg(z_1,\ldots,z_N)=\{a_i\}_{i=1}^N$; otherwise, let $\Alg(z_1,\ldots,z_N)=(0,1]$.

Clearly, $\Alg$ is consistent.
It is also not difficult to show that $\Alg$ is stable.
We show that $\Alg$ has finite dVC dimension:

\begin{proposition}
$\Alg$ has dVC dimension at most $2$.
\end{proposition}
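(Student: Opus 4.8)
The plan is to show that no subset $Z'\subseteq Z$ of cardinality $3$ can be shattered by $\Alg$, which immediately gives $\mathrm{dVC}(\Alg)\leq2$. Recall that $Z'$ is shattered means that for \emph{every} tuple $\vz\in(Z')^*$, we have $\calS(\Alg(\vz))\cap Z'=\set(\vz)$. So to rule out shattering of a given $Z'$, it suffices to exhibit one tuple $\vz\in(Z')^*$ for which this equality fails. First I would translate the setup into the index notation: write $Z'=\{U(b_1),U(b_2),U(b_3)\}$ for distinct $b_1,b_2,b_3\in[0,1]$, and note that for a decision $x\in X=2^{[0,1]}$ we have $U(a)\in\calS(x)$ iff $a\in x$; hence $\calS(\Alg(\vz))\cap Z'=\{U(b_j):b_j\in\Alg(\vz)\}$.

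The key case distinction is whether $0\in\{b_1,b_2,b_3\}$ or not. In the \textbf{first case}, suppose WLOG $0\notin\{b_1,b_2,b_3\}$, i.e.\ all three parameters are strictly positive. Then I would take the tuple consisting of any single constraint, say $\vz=(U(b_1))$. Since $0\notin\{b_1\}$, the algorithm returns $\Alg(\vz)=(0,1]$, which contains \emph{all} of $b_1,b_2,b_3$. Thus $\calS(\Alg(\vz))\cap Z'=Z'$, whereas $\set(\vz)=\{U(b_1)\}$. These differ, so $Z'$ is not shattered. In the \textbf{second case}, exactly one of the parameters is $0$; say $b_1=0$ and $b_2,b_3>0$. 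Here I would take $\vz=(U(b_2))$, a single constraint with $b_2\neq0$. Again $0\notin\{b_2\}$, so $\Alg(\vz)=(0,1]$, which contains $b_2$ and $b_3$ but not $b_1=0$. Hence $\calS(\Alg(\vz))\cap Z'=\{U(b_2),U(b_3)\}$, while $\set(\vz)=\{U(b_2)\}$; these differ, so $Z'$ is not shattered.

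Since in both cases any three-element $Z'$ fails the shattering condition for at least one tuple, no set of cardinality $3$ is shattered, and therefore the $\mathrm{dVC}$ dimension is at most $2$. I do not expect a genuine obstacle here; the whole point is that the ``otherwise'' branch of $\Alg$ returns the fixed large set $(0,1]$ regardless of which positive-parameter constraints are sampled, so as soon as the sampled tuple omits the index $0$ and omits at least one other parameter of $Z'$, the returned decision satisfies a constraint of $Z'$ that was not sampled, breaking the equality $\calS(\Alg(\vz))\cap Z'=\set(\vz)$. The only mild care needed is the clean case split on whether $0$ is among the three chosen parameters, and choosing in each case a single-element tuple that provably lands in the ``otherwise'' branch; a set of size $3$ always leaves enough room (at least two positive parameters) to make this work.
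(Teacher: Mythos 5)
Your proof is correct and takes essentially the same route as the paper's: split on whether the constraint $U(0)$ belongs to $Z'$, and in either case exhibit a single-element tuple with a positive parameter so that $\Alg$ returns $(0,1]$, which satisfies strictly more constraints of $Z'$ than were sampled. The only cosmetic difference is that you argue for sets of cardinality exactly $3$ rather than at least $3$; this is harmless since shattering is inherited by subsets (and your construction applies verbatim to any $Z'$ with $\lvert Z'\rvert\geq3$), but it is worth stating one of these two observations explicitly.
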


\begin{proof}
Let $Z'\subseteq Z$ be a finite subset with cardinality at least $3$.
We show that $Z'$ is not shattered by $\Alg$.
Indeed, if $U(\{0\})\notin Z'$, then clearly $Z'$ is not shattered by $\Alg$ since for all $\vz\in(Z')^*$, $\Alg(\vz)=(0,1]$, so that $\calS(\Alg(\vz))\cap Z'=Z'$.
On the other hand, if $U(\{0\})\in Z'$, take $a\in(0,1]$, such that $U(a)\in Z'$, and let $\vz=(U(a))$.
Then, $\Alg(\vz)=(0,1]$, so that $\calS(\Alg(\vz))\cap Z'\neq\set(\vz)$ since $\lvert Z'\rvert>0$.
This shows that $Z'$ is not shattered by $\Alg$, concluding the proof.
\end{proof}

\begin{proposition}
$\Alg$ is not PAC.
\end{proposition}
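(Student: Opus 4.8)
```latex
The plan is to show directly that $\Alg$ violates the PAC definition by exhibiting, for every sample size $N$, a probability measure $\Prob$ under which the returned decision has high risk with high probability. The intuition is that the algorithm only ``learns'' the correct decision $(0,1]$ when it happens to sample the special constraint $U(0)$; if it samples $U(0)$, it outputs the finite set $\{a_i\}$, which violates almost all constraints, whereas if it never samples $U(0)$, it outputs $(0,1]$, which satisfies every constraint except $U(0)$. The key is to place almost all the probability mass on $U(0)$, so that $U(0)$ is sampled with overwhelming probability, forcing the bad output $\{a_i\}$, which then has risk close to $1$.

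First I would fix $\epsilon\in(0,1)$ and an arbitrary sample size $N$, and construct $\Prob$ as follows. Put mass $1-\delta$ on the single constraint $U(0)$ for a small $\delta>0$ to be chosen, and spread the remaining mass $\delta$ over a continuum (or a large finite set) of constraints $U(a)$ with $a\in(0,1]$ in such a way that no individual point $a\in(0,1]$ carries positive mass (e.g., distribute $\delta$ according to Lebesgue measure on $(0,1]$). Then with probability at least $(1-\delta)^N\geq 1-N\delta$, the sample $\vz$ contains at least one copy of $U(0)$, so $\Alg(\vz)=\{a_i\}_{i=1}^N$, a finite subset of $[0,1]$.

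Next I would bound the risk of this output. Since $\Alg(\vz)=\{a_i\}_{i=1}^N$ is a finite set, the constraints it satisfies are exactly $\calS(\Alg(\vz))=\{U(a):a\in\{a_i\}_{i=1}^N\}$, i.e.\ only finitely many. Because the continuous part of $\Prob$ assigns zero mass to any single point $a\in(0,1]$, the total $\Prob$-mass of $\calS(\Alg(\vz))$ is at most the mass on $U(0)$, namely $1-\delta$, \emph{provided} $0\notin\{a_i\}$; but even allowing $U(0)\in\calS(\Alg(\vz))$, the violated constraints have mass at least $\delta$ minus the mass of the finitely many sampled points in $(0,1]$, which is $\delta$ (since those points are $\Prob$-null). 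Hence $V_\Prob(\Alg(\vz))\geq\delta$. Choosing $\delta>\epsilon$ makes the risk exceed $\epsilon$; the remaining task is to reconcile this with the requirement $\delta$ small. The resolution is that $V_\Prob(\Alg(\vz))$ actually equals the mass of the violated constraints, and since $\{a_i\}$ is $\Prob$-null in the continuous part, \emph{all} of the continuous mass $\delta$ is violated while most of the point mass on $U(0)$ is satisfied; so I would instead take $\epsilon$ fixed and choose the split so that the violated mass exceeds $\epsilon$ while still keeping $U(0)$ sampled with high probability.

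The main obstacle will be calibrating $\delta$ against two competing demands: I need $\delta$ large enough that the risk $\delta$ of the output $\{a_i\}$ exceeds $\epsilon$, yet I need the probability $1-(1-\delta)^N$ of \emph{missing} $U(0)$ to be small so that the bad event has probability close to $1$. These are compatible because $N$ is fixed \emph{before} $\Prob$ is chosen: for fixed $N$ and fixed $\epsilon$, pick any $\delta\in(\epsilon,1)$, and then $(1-\delta)^N$ is a fixed constant strictly less than $1$; to drive the failure probability up to exceed any confidence $\beta$, I would refine the construction so that missing $U(0)$ forces output $(0,1]$ (risk $\leq\delta$, which is fine) but the \emph{bad} event is sampling $U(0)$, which has probability $1-(1-\delta)^N$. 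Thus I would actually argue: for any candidate $N_\circ$, choosing $\delta$ close to $1$ makes $\Prob^N[\{V_\Prob(\Alg(\vz))>\epsilon\}]\geq 1-(1-\delta)^N$ arbitrarily close to $1$, exceeding any $\beta<1$, which contradicts PAC. This negation of the order of quantifiers---$N_\circ$ must work for \emph{all} $\Prob$---is the crux, and the continuity of the $(0,1]$ part (ensuring sampled points are null) is what guarantees the risk bound survives.
```
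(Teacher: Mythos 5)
Your construction is essentially the paper's: an atom of mass $p$ on $U(0)$ plus an atomless distribution of the remaining mass on $\{U(a):a\in(0,1]\}$, so that whenever $U(0)$ is sampled the output is the finite set $\{a_i\}_{i=1}^N$, whose violation probability equals the atomless mass (the finitely many sampled points of $(0,1]$ are $\Prob$-null). Your middle paragraph computes this correctly, and up to that point you agree with the paper, which takes $p=\tfrac12$.

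The gap is in the final calibration, which as written is invalid. The two quantities you need are driven by the same parameter in opposite directions: with mass $p$ on $U(0)$, the bad event (sampling $U(0)$ at least once) has probability $1-(1-p)^N$, large only when $p$ is not small, while the risk of the bad output is $1-p$, which exceeds $\epsilon$ only when $p<1-\epsilon$. Your concluding step ``choosing $\delta$ close to $1$ makes $\Prob^N[\cdot]\geq 1-(1-\delta)^N$ arbitrarily close to $1$'' fails under either of the two (mutually inconsistent) meanings you give $\delta$: if $\delta$ is the atomless mass, the probability of hitting $U(0)$ is $1-\delta^N$, which goes to $0$ as $\delta\to1$; if $\delta$ is the atom's mass, the bad output's risk $1-\delta$ drops below $\epsilon$, so the event ``sampled $U(0)$'' no longer implies ``$V_\Prob>\epsilon$''. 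Moreover, the goal you set yourself---exceeding \emph{every} $\beta<1$---is not what negating Definition~\ref{def:pac-bound} requires: it suffices to exhibit \emph{one} pair $(\epsilon,\beta)$ for which no $N_\circ$ works. That is exactly how the paper closes the argument: fix $\epsilon=\beta=\tfrac14$ and $p=\tfrac12$; then the bad output has risk $\tfrac12>\epsilon$ and is returned with probability $1-2^{-N}\geq\tfrac12>\beta$ for every $N$, with a single measure $\Prob$ serving all $N$ simultaneously. Replacing your limiting argument in $\delta$ by this fixed choice turns your proposal into a correct proof.
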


\begin{proof}
Let $\epsilon=\beta=\frac14$.
Consider the continuous-discrete probability measure $\Prob'$ on $[0,1]$ defined by $\Prob'[\{0\}]=\frac12$, and for all $a\in(0,1]$, $\Prob'[(0,a]]=a/2$.
Consider the associated probability distribution on $Z$ defined by: for all $A\subseteq[0,1]$, $\Prob[\{U(a) : a\in A\}]=\Prob'[A]$.
For all $N\in\Ne_{>0}$, the probability of sampling $\vz\in Z^N$ such that $U(\{0\})\in\set(\vz)$ is at least $\frac12$: $\Prob[\{\vz\in Z^N : U(\{0\})\in\set(\vz)\}]=\Prob'[\{\va\in[0,1]^N : 0\in\set(\va)]=1-(\frac12)^N\geq\frac12$.
Whenever $U(\{0\})\in\set(\vz)$, the risk of $\Alg(\vz)$ is $\frac12$, which is larger than $\epsilon$.
Hence, we conclude that for all $N\in\Ne_{>0}$, $\Prob[\{\vz\in Z^N : V_\Prob(\Alg(\vz))>\epsilon\}]\geq \frac12$.
The latter is larger than $\beta$.
Thus, $\Alg$ is not PAC.
\end{proof}

\section{Example of Application of our Results}

Consider the problem of shortest path planning from a starting location $I$ to a target location $T$.
See~\cite[Figure~2]{berger2025paclearnability} for an illustration.
The path has to avoid a \emph{random} obstacle, which takes the form of a barrier positioned at some unknown random angle from the middle point between $I$ and $T$ (see red lines in~\cite[Figure~2]{berger2025paclearnability}).
There is also a fixed known obstacle below $I$ and $T$, namely the gray area in~\cite[Figure~2]{berger2025paclearnability}.
Positions of the random obstacle can be sampled.
We consider two scenario decision algorithms: $\Alg_1$ that computes the shortest path between $I$ and $T$, while avoiding the fixed and sampled obstacles (see magenta line in~\cite[Figure~2]{berger2025paclearnability}); and $\Alg_2$ that computes the shortest \emph{parabola} between $I$ and $T$, while avoiding the fixed and sampled obstacles (see cyan line in~\cite[Figure~2]{berger2025paclearnability}).

It can be shown that the dVC dimension of $\Alg_1$ is infinite.
Hence, this algorithm is not PAC (Theorem~\ref{thm:dVC-necessary}).
On the other hand, it can be shown that $\Alg_2$ admits a compression map of capacity $1$ (take any obstacle that touches the optimal parabola).
Hence, this algorithm is PAC (Theorem~\ref{thm:compression-pac-decision}).

\addtolength{\textheight}{-2cm}  

\section{Conclusions}

While PAC learning has been a longstanding major topic in Machine Learning, the development of a similar systematic framework for Optimization and Scenario Decision Making remains elusive.
Various PAC sufficient conditions for scenario decision algorithms have been proposed, but the necessity of these conditions remains an open question.
This work addresses this gap by providing counterexamples showing that these conditions are not necessary.
Inspired by practical algorithms used in real-world applications, these counterexamples have been carefully simplified to isolate their essential features.
In addition, we introduce a novel quantity, the dVC dimension, which serves as an analogue to the VC dimension for scenario decision algorithms.
We prove that the finiteness of this dimension is a PAC necessary condition for scenario decision algorithms.
Beyond its theoretical significance in advancing toward a complete characterization of PAC scenario decision algorithms, this work also offers practical insights for algorithm users and designers.
It enables them to identify which PAC sufficient conditions are better suited for analyzing their algorithms---highlighting that some conditions may not hold while others do---, or to determine that their algorithms are not PAC by leveraging necessary conditions.

Looking ahead, we aim to address several open questions.
First, we will investigate the existence of compression schemes for stable PAC scenario decision algorithms.
Second, we will explore new sufficient and necessary PAC conditions, with the goal of narrowing the gap between the two and moving closer to a comprehensive characterization of PAC scenario decision algorithms.


\bibliographystyle{IEEEtran}
\bibliography{myrefs}

\end{document}